\title{
Diffusion Approximations for Online Principal Component Estimation and Global Convergence
}
\author{
Chris Junchi Li
\qquad\quad
Mengdi Wang
\qquad\qquad
Han Liu
\qquad\qquad
\\
%Department of Operations Research and Financial Engineering,
Princeton University
\\
Department of Operations Research and Financial Engineering,
Princeton, NJ 08544
\\
\texttt{\{junchil,mengdiw,hanliu\}@princeton.edu} 
%\texttt{\{junchil,mengdiw\}@princeton.edu} 
\AND
Tong Zhang
\\
Tencent AI Lab
\\
Shennan Ave, Nanshan District, Shenzhen, Guangdong Province 518057, China
\\
\texttt{tongzhang@tongzhang-ml.org}
}
\def\var{{\rm var}}
\def\beq{\begin{equation} }
\def\eeq{\end{equation} }
\def\pb{}
\begin{document}
% \nipsfinalcopy is no longer used

\maketitle

\setlength\abovedisplayskip{3pt}
\setlength\belowdisplayskip{3pt}

\begin{abstract} 
In this paper, we propose to adopt the diffusion approximation tools to study the dynamics of Oja's iteration which is an online stochastic gradient descent method for the principal component analysis. Oja's iteration maintains a running estimate of the true principal component from streaming data and enjoys less temporal and spatial complexities. We show that the Oja's iteration for the top eigenvector generates a continuous-state discrete-time Markov chain over the unit sphere. We characterize the Oja's iteration in three phases using diffusion approximation and weak convergence tools. Our three-phase analysis further provides a finite-sample error bound for the running estimate, which matches the minimax information lower bound for principal component analysis under the additional assumption of bounded samples.
\end{abstract}

\def\bX{\bZ}
\def\ub{\wb}

\section{Introduction}
In the procedure of Principal Component Analysis (PCA) we aim at learning the principal leading eigenvector of the covariance matrix of a $d$-dimensional random vector $\bX$ from its independent and identically distributed realizations $\bX_1,\dots,\bX_n$. Let $\EE[\bX] = 0$, and let the eigenvalues of $\bSigma$ be $\lambda_1>\lambda_2\ge \cdots\ge\lambda_d>0$, then the PCA problem can be formulated as minimizing the expectation of a nonconvex function:
\beq\label{opt}
\begin{split}
&\hbox{minimize}~~ -\ub^\top  \EE\left[\bX \bX^{\top}\right]  \ub,\\
&\hbox{subject to  } \|\ub\| = 1, \ub\in\RR^d
,
\end{split}
\eeq
where $\|\cdot\|$ denotes the Euclidean norm. Since the \textit{eigengap} $\lambda_1 - \lambda_2$ is nonzero, the solution to \eqref{opt} is unique, denoted by $\ub^*$.
%
%See Figure \ref{fig1} for a geometric illustration.
%\begin{figure}
%\centering
%\includegraphics[height=2.2in]{fig/pca_landscape_sunju}
%\includegraphics[height=2.2in]{fig/pca_illus1}
%%\includegraphics[height=3in]{fig/pca_illus2}
%\caption{
%Left: Geometry landscape of PCA originated from \cite{sun2015nonconvex}.
%Right: An exemplary PCA data plots originated from \cite{HASTIETIBSHIRANIFRIEDMAN}. 
%%Middle: PCA Illustration Plots. 
%%Right: PCA Applications in Handwritten Digits.
%}
%\label{fig1}
%\end{figure}
%
The classical method of finding the estimator of the first leading eigenvector $\ub^*$ can be formulated as the solution to the empirical covariance problem as
\[
\hat \ub^{(n)} 
= 
\argmin_{\|\ub\|=1} \, - \ub^\top  \hat\bSigma^{(n)}   \ub
,
\qquad\text{where }
\hat\bSigma^{(n)}
\equiv
\frac{1}{n}\sum_{i=1}^n \bX^{(i)} \left(\bX^{(i)}\right)^\top
.
\]
In words, $\hat\bSigma^{(n)} $ denotes the empirical covariance matrix for the first $n$ samples. The estimator $\hat\ub^{(n)}$ produced via this process provides a statistical optimal solution $\hat \ub^{(n)}$. Precisely, \cite{vu2013minimax} shows that the angle between any estimator $\tilde \ub^{(n)}$ that is a function of the first $n$ samples and $\ub^*$ has the following \textit{minimax lower bound}
\beq\label{minimax_opt}
\inf_{\tilde\ub^{(n)}} \sup_{\bX\in \cM(\sigma_*^2, d)} 
\EE\left[ \sin^2 \angle(\tilde\ub^{(n)}, \ub^*) \right]
\ge 
c\cdot 
\sigma_*^2
\cdot
\frac{d-1}{n},
\eeq
where $c$ is some positive constant. Here the infimum of $\tilde{\ub}^{(n)}$ is taken over all principal eigenvector estimators, and $\mathcal{M}(\sigma_*^2,d)$ is the collection of all $d$-dimensional subgaussian distributions with mean zero and \textit{eigengap} $\lambda_1-\lambda_2 > 0$ satisfying
$
\lambda_1\lambda_2 /(\lambda_1 - \lambda_2)^2
\le
\sigma_*^2
.
$
Classical PCA method has time complexity $\cO(n d^2)$ and space complexity $\cO(d^2)$. The drawback of this method is that, when the data samples are high-dimensional, computing and storage of a large empirical covariance matrix can be costly.

% \vskip15pt
   \begin{figure*}
   \centering
   \includegraphics[height=2in]{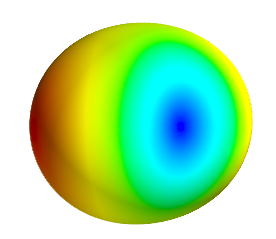}
   \qquad\qquad
   \includegraphics[height=1.8in]{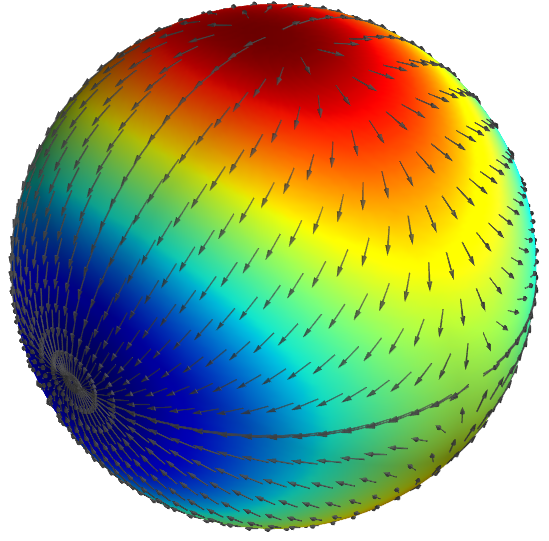}
   \caption{Left: an  objective function for the top-1 PCA, where we use both the radius and heatmap to represent the function value at each point of the unit sphere.
   %For illustration purposes, we have also added a global offset of appropriate value to the objective function.
   Right: A quiver plot on the unit sphere denoting the directions of negative gradient of the PCA objective.}
   \label{fig:illu}
   \end{figure*}
%\vskip8pt

In this paper we concentrate on the \textit{streaming or online method} for PCA that processes online data and estimates the principal component sequentially without explicitly computing and storing the empirical covariance matrix $\hat\bSigma$. Over thirty years ago, \citet{oja1982simplified} proposed an online PCA iteration that can be regarded as a projected stochastic gradient descent method as
\beq\label{Ojaalgo}
\ub^{(n)} = \Pi \left[ \ub^{(n-1)} + \beta \bX^{(n)} (\bX^{(n)})^\top \ub^{(n-1)}\right].
\eeq
Here $\beta$ is some positive learning rule or stepsize, and $\Pi$ is defined as $\Pi \ub = \|\ub\|^{-1} \ub$ for each nonzero vector $\ub$, namely, $\Pi$ projects any vector onto the unit sphere $\cS^{d-1} = \{\ub\in\RR^d \mid \|\ub\|=1\}$. Oja's iteration enjoys a less expensive time complexity $\cO(n d)$ and space complexity $\cO(d)$ and thereby has been used as an alternative method for PCA when both the dimension $d$ and number of samples $n$ are large. 

In this paper, we adopt the diffusion approximation method to characterize the stochastic algorithm using Markov processes and its differential equation approximations. The diffusion process approximation is a fundamental and powerful analytic tool for analyzing complicated stochastic process. By leveraging the tool of weak convergence, we are able to conduct a heuristic finite-sample analysis of the Oja's iteration and obtain a convergence rate which, by carefully choosing the stepsize $\beta$, matches the PCA minimax information lower bound. Our analysis involves the weak convergence theory for Markov processes \citep{ETHIER-KURTZ}, which is believed to have a potential for a broader class of stochastic algorithms for nonconvex optimization, such as tensor decomposition, phase retrieval, matrix completion, neural network, etc.

\paragraph{Our Contributions }

We provide a Markov chain characterization of the stochastic process $\{\ub^{(n)}\}$ generated by the Oja's iteration with constant stepsize. We show that upon appropriate scalings, the iterates as a Markov process weakly converges to the solution of an ordinary differential equation system, which is a multi-dimensional analogue to the logistic equations. Also locally around the neighborhood of a stationary point, upon a different scaling the process weakly converges to the multidimensional Ornstein-Uhlenbeck processes. Moreover, we identify from differential equation approximations that the global convergence dynamics of the Oja's iteration has three distinct phases:

\begin{enumerate}[label=(\roman*)]
\item
The initial phase corresponds to escaping from unstable stationary points; 
\item
The second phase corresponds to fast deterministic crossing period;
\item
The third phase corresponds to stable oscillation around the true principal component. 
\end{enumerate}

Lastly, this is the first work that analyze the \textit{global} rate of convergence analysis of Oja's iteration, i.e.,~the convergence rate does \textit{not} have any initialization requirements.

% \vskip15pt
   \begin{figure*}%[!htb]
   \centering
   \includegraphics[height=3in]{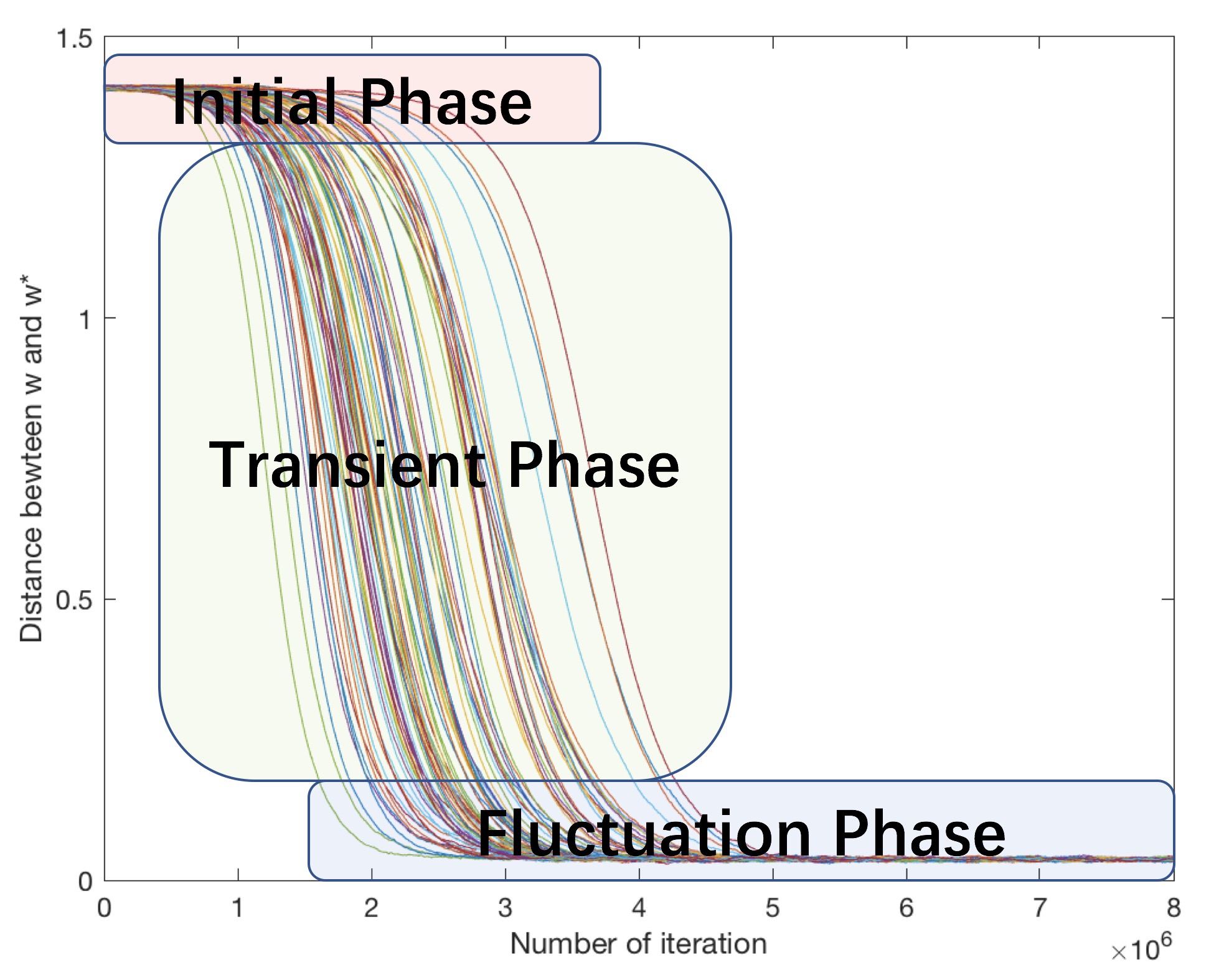}
   \caption{A simulation plot of Oja's method, marked with the three phases.}
   \label{fig:illu}
   \end{figure*}
%\vskip8pt

\pb\paragraph{Related Literatures}
This paper is a natural companion to paper by the authors' recent work \cite{li2016near} that gives explicit rate analysis using a discrete-time martingale-based approach. In this paper, we provide a much simpler and more insightful heuristic analysis based on diffusion approximation method under the additional assumption of bounded samples.

The idea of stochastic approximation for PCA problem can be traced back to \citet{krasulina1969method} published almost fifty years ago. His work proposed an algorithm that is regarded as the stochastic gradient descent method for the Rayleigh quotient. In contrast, Oja's iteration can be regarded as a projected stochastic gradient descent method. The method of using differential equation tools for PCA appeared in the first papers \cite{oja1985stochastic,krasulina1969method} to prove convergence result to the principal component, among which, \cite{oja1985stochastic} also analyze the subspace learning for PCA. See also \cite[Chap.~1]{HELMKE-MOORE} for a gradient flow dynamical system perspective of Oja's iteration.

The convergence rate analysis of the online PCA iteration has been very few until the recent big data tsunami, when the need to handle massive amounts of data emerges. Recent works by \cite{balsubramani2013fast,de2015global,shamir2015convergence,jain2016matching} study the convergence of online PCA from different perspectives, and obtain some useful rate results. Our analysis using the tools of diffusion approximations suggests a rate that is sharper than all existing results, and our \textit{global} convergence rate result poses no requirement for initialization.

\pb\paragraph{More Literatures}

Our work is related to a very recent line of work \cite{ge2015escaping, sun2015complete, sun2015complete2, sun2015nonconvex, sun2016geometric, lee2016gradient, anandkumar2016efficient, panageas2016gradient} on the global dynamics of nonconvex optimization with statistical structures. These works carefully characterize the \textit{global geometry} of the objective functions, and in special, around the unstable stationary points including saddle points and local maximizers. To solve the optimization problem various algorithms were used, including (stochastic) gradient method with random initialization or noise injection as well as variants of Newton's method. The unstable stationary points can hence be avoided, enabling the global convergence to desirable local minimizers. 

%Compared with this line of work, our analysis takes a completely different approach based on diffusion processes, which is also related to another line of work \cite{darken1991towards, de2015global, su2014differential, li2015dynamics, mobahi2016training, mandt2016variational, li2016online}.

Our diffusion process-based characterization of SGD is also related to another line of work \cite{darken1991towards, de2015global, su2016differential, li2015dynamics, mandt2016variational}. Among them, \cite{de2015global} uses techniques based on martingales in discrete time to quantify the global convergence of SGD on matrix decomposition problems. In comparison, our techniques are based on Stroock and Varadhan's weak convergence of Markov chains to diffusion processes, which yield the continuous-time dynamics of SGD. The rest of these results mostly focus on analyzing continuous-time dynamics of gradient descent or SGD on convex optimization problems. In comparison, we are the first to characterize the global dynamics for nonconvex statistical optimization. In particular, the first and second phases of our characterization, especially the unstable Ornstein-Uhlenbeck process, are unique to nonconvex problems. Also, it is worth noting that, using the arguments of \cite{mandt2016variational}, we can show that the diffusion process-based characterization admits a variational Bayesian interpretation of nonconvex statistical optimization. However, we do not pursue this direction in this paper.

In the mathematical programming and statistics communities, the computational and statistical aspects of PCA are often studied separately. From the statistical perspective, recent developments have focused on 
estimating principal components for very high-dimensional data. When the data dimension is much larger than the sample size, i.e., $d\gg n$, classical method using decomposition of the empirical convariance matrix produces inconsistent estimates \citep{Johnstone:2012eg,Nadler:2008cc}. Sparsity-based methods have been studied, such as the truncated power method studied by \cite{yuan2013truncated} and \cite{wang2014nonconvex}. Other sparsity regularization methods for high dimensional PCA has been studied in \cite{Johnstone:2012eg,vu2013minimax,Vu:2012vn,Zou06,d2008optimal,amini2008high,ma2013sparse,cai2013sparse}, etc. Note that in this paper we do not consider the high-dimensional regime and sparsity regularization.

From the computational perspective, power iterations or the Lanczos method are well studied. These iterative methods require performing multiple products between vectors and empirical covariance matrices. Such operation usually involves multiple passes over the data, whose complexity may scale with the eigengap and dimensions \citep{kuczynski1992estimating,musco2015stronger}. Recently, randomized algorithms have been developed to reduce the computation complexity \citep{shamir2015stochastic,shamir2015fast,garber2015fast}. A critical trend today is to combine the computational and statistical aspects and to develop algorithmic estimator that admits fast computation as well as good estimation properties. Related literatures include \cite{arora2012stochastic,arora2013stochastic,mitliagkas2013memory,hardt2014noisy,de2015global}.

\paragraph{Organization}
\S \ref{sec:algo} introduces the settings and distributional assumptions. 
\S \ref{sec:diffusion} briefly discusses the Oja's iteration from the Markov processes perspective and characterizes that it globally admits ordinary differential equation approximation upon appropriate scaling, and also stochastic differential equation approximation locally in the neighborhood of each stationary point.
\S \ref{sec:finitesample} utilizes the weak convergence results and provides a three-phase argument for the global convergence rate analysis, which is near-optimal for the Oja's iteration. Concluding remarks are provided in \S\ref{sec:remark}.

\pb\section{Settings}\label{sec:algo}
In this section, we present the basic settings for the Oja's iteration. The algorithm maintains a running estimate $ \ub^{(n)}$ of the true principal component $ \ub^{*}$, and updates it while receiving streaming samples from exterior data source. We summarize our distributional assumptions. 

\begin{assumption}\label{assu:distribution}
The random vectors $\bX \equiv \bX^{(1)}, \dots, \bX^{(n)}\in \RR^d$ are independent and identically distributed and have the following properties:
\begin{enumerate}[label=(\roman*)]
\item
$\EE[ \bX ] = 0$ and $\EE\left[ \bX \bX^{\top} \right] = \bSigma$;
\item
$\lambda_1 > \lambda_2 \ge \dots\ge \lambda_d > 0$;
\item
There is a constant $B$ such that $\| \bX \|^2 \le B$.
\end{enumerate}
\end{assumption}

For the easiness of presentation, we transform the iterates $\ub^{(n)}$ and define the \textit{rescaled samples}, as follows. First we let the eigendecomposition of the covariance matrix be 
\[
\bSigma =\EE\left[\bX\bX^\top\right]  = \Ub\bLambda \Ub^\top,
\]
where $\bLambda = \diag(\lambda_1, \lambda_2, \ldots, \lambda_d)$ is a diagonal matrix with diagonal entries $\lambda_1, \lambda_2, \ldots, \lambda_d$, and $\Ub$ is an orthogonal matrix consisting of column eigenvectors of $\bSigma$. Clearly the first column of $\Ub$ is equal to the principal component $\ub^*$. Note that the diagonal decomposition might not be unique, in which case we work with an arbitrary one. Second, let
\beq\label{trans_def}
\bY^{(n)} = \Ub^\top \bX^{(n)},  \vb^{(n)} = \Ub^\top \ub^{(n)},  \vb^* = \Ub^\top \ub^*.
\eeq
One can easily verify that 
\[
\EE [\bY] = 0,\qquad \EE\left[ \bY \bY^{\top}\right] = \bLambda
;
\]
The principal component of the rescaled random variable $\bY$, which we denote by $\vb^*$, is equal to $\eb_1$, where $\{\mathbf{e}_1, \ldots, \mathbf{e}_d\}$ is the canonical basis of $\RR^d$. By applying the orthonormal transformation $\Ub^{\top}$ to the stochastic process $\{\ub^{(n)}\}$, we obtain an iterative process $\{\vb^{(n)} = \Ub^\top \ub^{(n)}\}$ in the rescaled space:
\beq\label{oja}
\begin{split}
\vb^{(n)} 
= 
\Ub^\top \ub^{(n)}
&=
\Pi \left\{ \Ub^\top  \ub^{(n-1)} + \beta \Ub^\top \bX^{(n)} \left(\bX^{(n)} \right)^\top \Ub\Ub^\top \ub^{(n-1)}\right\}
\\&= 
\Pi \left\{  \vb^{(n-1)} + \beta \bY^{(n)} \left( \bY^{(n)}\right)^\top  \vb^{(n-1)}\right\}
.
\end{split}
\eeq
Moreover, the angle processes associated with $\{\ub^{(n)}\}$ and $\{\vb^{(n)}\}$ are equivalent, i.e.,
\beq\label{angle-preserving}
\angle( \ub^{(n)},\ub^*) = \angle( \vb^{(n)},\vb^*)
.
\eeq
Therefore it would be sufficient to study the rescaled iteration $\vb^{(n)}$ in \eqref{oja} and the transformed iteration $\bY^{(n)}$ throughout the rest of this paper.

\pb\section{A Theory of Diffusion Approximation for PCA}\label{sec:diffusion}
In this section we show that the stochastic iterates generated by the Oja's iteration can be \textit{approximated} by the solution of an ODE system upon appropriate scaling, as long as $\beta$ is small. To work on the approximation we first observe that the iteration $\vb^{(n)}$, $n = 0, 1, \dots$ generated by \eqref{oja} forms a discrete-time, time-homogeneous Markov process that takes values on $\cS^{d-1}$. Furthermore, $\vb^{(n)}$ holds strong Markov property.

\subsection{Global ODE Approximation}\label{sec:ODE}

To state our results on differential equation approximations, let us define a new process, which is obtained by rescaling the time index $n$ according to the stepsize $\beta$
\beq\label{Vktilde}
\tilde \bV^\beta(t) 
\equiv 
\vb^{\beta, \left(\lfloor t\beta^{-1} \rfloor\right)}
.
\eeq
We add the superscript $\beta$ in the notation to emphasize the dependence of the process on $\beta$. We will show that $\tilde \bV^\beta(t)$ converges weakly to a deterministic function $\bV(t)$, as $\beta\to 0^+$. 

\setlength{\parindent}{1em}
Furthermore, we can identify the limit $\bV(t)$ as the closed-form solution to an ODE system. Under Assumption \ref{assu:distribution} and using an infinitesimal generator analysis we have
\[
\big|\tilde \bV^\beta(t+\beta) - \tilde \bV^\beta(t) \big| = \cO( B \beta).
\]
It follows that, as $\beta\to 0^+$, the infinitesimal conditional variance tends to 0:
\[
\beta^{-1} \var\left[  \tilde \bV^\beta(t+\beta) - \tilde \bV^\beta(t)   \, \big|\, \tilde \bV^\beta(t) = \vb \right]  = \cO(B\beta)
,
\]
and the infinitesimal mean is 
\[
\beta^{-1} \EE\left[  \tilde \bV^\beta(t+\beta) - \tilde \bV^\beta(t)      \, \big|\, \tilde \bV^\beta(t) =\vb \right]  
= 
\left( \bLambda   -  \bV^\top \bLambda \bV \right) \bV + \cO(B^2\beta^2)
.
\]
Using the classical weak convergence to diffusion argument \citep[Corollary 4.2 in \S 7.4]{ETHIER-KURTZ}, we obtain the following result.

\begin{theorem}\label{theo:ODE}
If $\vb^{\beta,(0)}$ converges weakly to some constant vector $\bV^o \in \cS^{d-1}$ as $\beta\to 0^+$ then the Markov process $\vb^{\beta,(\lfloor t\beta^{-1} \rfloor)}$ converges weakly to the solution $\bV = \bV(t)$ to the following ordinary differential equation system
\beq\label{ODEIVP}
\frac{\ud \bV}{\ud t} = \left( \bLambda   -  \bV^\top \bLambda \bV \right) \bV
,
\eeq
with initial values $\bV(0) = \bV^o$.
\end{theorem}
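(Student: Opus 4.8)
The plan is to invoke the standard generator-convergence criterion for weak convergence of Markov chains to a diffusion limit, \cite[Corollary 4.2 in \S 7.4]{ETHIER-KURTZ}; here the limiting diffusion coefficient vanishes, so the limit is the deterministic flow of \eqref{ODEIVP}. First I would compute the one-step behavior of the chain $\{\vb^{(n)}\}$. Writing $\vb$ for the current state and $\bY$ for a fresh rescaled sample, the Oja update is $\vb\mapsto\Pi(\vb+\beta\bY\bY^\top\vb)$, and since $\|\vb\|=1$ the squared norm of the un-normalized vector is $1+2\beta(\bY^\top\vb)^2+\beta^2(\bY^\top\vb)^2\|\bY\|^2$. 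Using $\|\bY\|^2=\|\bX\|^2\le B$ from Assumption \ref{assu:distribution}(iii), a Taylor expansion of the reciprocal square root gives
\[
\vb^{(n)} - \vb = \beta\left[\bY\bY^\top\vb - (\bY^\top\vb)^2\vb\right] + \cO(B^2\beta^2),
\]
uniformly over $\vb\in\cS^{d-1}$ and over the support of $\bY$.

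Taking conditional expectations and using $\EE[\bY\bY^\top]=\bLambda$ (hence $\EE[(\bY^\top\vb)^2]=\vb^\top\bLambda\vb$) yields the infinitesimal mean $\beta^{-1}\EE[\vb^{(n)}-\vb\mid\vb]=(\bLambda-\vb^\top\bLambda\vb)\vb+\cO(B^2\beta)$, which is exactly the drift of \eqref{ODEIVP}; since the increment is $\cO(B\beta)$ in norm, the infinitesimal second moment is $\cO(B^2\beta^2)$, so after dividing by $\beta$ the infinitesimal variance is $\cO(B^2\beta)\to0$, and all higher moments of the increments obey the same bounded-support estimate. Consequently, for any $f\in C^\infty(\cS^{d-1})$ the discrete generators $A_\beta f(\vb)\equiv\beta^{-1}\EE[f(\vb^{(n)})-f(\vb)\mid\vb^{(n-1)}=\vb]$ converge, uniformly over the compact sphere, to $Af(\vb)=\langle(\bLambda-\vb^\top\bLambda\vb)\vb,\nabla f(\vb)\rangle$, the generator of the deterministic flow of \eqref{ODEIVP}.

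It remains to upgrade generator convergence to process-level weak convergence. Tightness of $\{\tilde\bV^\beta\}$ is immediate: the state space $\cS^{d-1}$ is compact and the increments are uniformly $\cO(B\beta)$. Well-posedness of the limiting martingale problem holds because the drift $(\bLambda-\vb^\top\bLambda\vb)\vb$ is a polynomial vector field, hence globally Lipschitz on the compact set $\cS^{d-1}$, so \eqref{ODEIVP} has a unique solution; moreover $\frac{\ud}{\ud t}\|\bV\|^2=2(\bV^\top\bLambda\bV)(1-\|\bV\|^2)=0$ on $\cS^{d-1}$, so the sphere is invariant and the limit never leaves the state space. Combining these facts with the assumed convergence $\vb^{\beta,(0)}\Rightarrow\bV^o$ and applying \cite[Corollary 4.2 in \S 7.4]{ETHIER-KURTZ} gives $\tilde\bV^\beta\Rightarrow\bV$ in $D([0,\infty),\cS^{d-1})$; since $\bV$ is continuous and deterministic this is in fact convergence in probability, uniformly on compact time intervals. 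Finally, diagonalizing \eqref{ODEIVP} in the eigenbasis gives the decoupled logistic-type system $\dot V_i=(\lambda_i-\sum_j\lambda_j V_j^2)V_i$, whose closed form $V_i(t)=V_i^o e^{\lambda_i t}\big/\sqrt{\sum_j (V_j^o)^2 e^{2\lambda_j t}}$ is verified by direct substitution.

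The main obstacle I anticipate is not the algebra but making the generator convergence genuinely uniform and matching it to the precise regularity hypotheses of \cite[\S 7.4]{ETHIER-KURTZ}: one must control the remainder in the Taylor expansion of the normalization uniformly in both $\vb$ and the sample $\bY$ (this is exactly where Assumption \ref{assu:distribution}(iii) is indispensable, since without bounded samples the $\cO(B^2\beta^2)$ terms need not be uniformly small), and one must verify that $C^\infty(\cS^{d-1})$ is a core for the limiting operator $A$ so that the martingale-problem comparison applies. A secondary subtlety is that the limit operator corresponds to a first-order transport equation rather than a genuine diffusion, so one should either invoke the degenerate case of the Ethier--Kurtz result directly or observe that the vanishing-variance case collapses the limiting martingale problem to the ODE \eqref{ODEIVP}, which is the route I would take.
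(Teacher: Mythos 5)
Your proposal is correct and takes essentially the same route as the paper: both compute the one-step increment via a Taylor expansion of the normalization (the paper packages this as Lemma~\ref{lemm:taylor} and Proposition~\ref{prop:infquan}), identify the infinitesimal mean as $(\bLambda-\vb^\top\bLambda\vb)\vb+\cO(\beta)$ and the infinitesimal variance as $\cO(B^2\beta)\to 0$, and then invoke \cite[Corollary~4.2 in \S 7.4]{ETHIER-KURTZ}. Your version is in fact slightly more careful than the paper's, spelling out tightness, invariance of $\cS^{d-1}$, and well-posedness of the limiting (degenerate) martingale problem, which the paper leaves implicit.
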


We can straightforwardly check for sanity that the solution vector $\bV(t)$ lies on the unit sphere $\cS^{d-1}$, i.e., $\|\bV(t)\| = 1$ for all $t \ge 0$.
Written in coordinates $\bV(t) = (V_1(t), \ldots, V_d(t))^\top$, the ODE is expressed for $k=1,\dots,d$
\[
\frac{\ud V_k}{\ud t} = V_k \sum_{i=1}^d (\lambda_k - \lambda_i) V_i^2
.
\]
One can straightforwardly verify that the solution to \eqref{ODEIVP} has
\beq\label{Vk}
V_k(t) = \left( Z(t) \right)^{-1/2} V_k(0) \exp(\lambda_k t),
\eeq
where $Z(t)$ is the normalization function
\[
Z(t) = \sum_{i=1}^d \left(V_i^o \right)^2 \exp(2\lambda_i t).
\]
 \setlength{\parindent}{1em}
To understand the limit function given by \eqref{Vk}, we note that in the special case where $\lambda_2 = \cdots = \lambda_d$
\[
Z(t) = \left(V_1^o\right)^2\exp(2\lambda_1 t) + \left(1 - \left(V_1^o\right)^2 \right)\exp(2\lambda_2 t),
\]
and
\beq\label{Vunder}
\left( V_1(t) \right)^2 = 
 \frac{\left(V_1^o\right)^2 \exp(2\lambda_1 t)}
{ \left(V_1^o\right)^2\exp(2\lambda_1 t) + \left(1 - \left(V_1^o\right)^2 \right)\exp(2\lambda_2 t) }.
\eeq
This is the formula of the \textit{logistic curve}. Hence analogously, $\bV(t)$ in \eqref{Vk} is namely the \textit{generalized logistic curves}.

%\paragraph{Remark}
%Standard theory in \cite{ETHIER-KURTZ} introduces convergence in probability implies weak convergence and, in the case where the limit is nonrandom, vice versa. Therefore $V_k^\beta(t)$ convergence in probability to $V_k(t)$. This is \textit{not} the case of stochastic differential equation approximation introduced in the upcoming \S\ref{sec:SDE}, since the sequence of iterations indexed by $\beta$ along with their weak-limit does not necessarily live in the same probability space. 

\pb\subsection{Local Approximation by Diffusion Processes}\label{sec:SDE}
The weak convergence to ODE theorem introduced in \S \ref{sec:ODE} characterizes the global dynamics of the Oja's iteration. Such approximation explains many behaviors, but neglected the presence of noise that plays a role in the algorithm. In this section we aim at understanding the Oja's iteration via stochastic differential equations (SDE). We refer the readers to \cite{OKSENDAL} for more on basic concepts of SDE.

In this section, we instead show that under some scaling, the process admits an approximation of multidimensional Ornstein-Uhlenbeck process within a neighborhood of each of the unstable stationary points, both stable and unstable. Afterwards, we develop some weak convergence results to give a rough estimate on the rate of convergence of the Oja's iteration. For purposes of illustration and brevity, we restrict ourselves to the case of starting point $\vb^{(0)}$ being the stationary point $\eb_k$ for some $k=1,\dots,d$, and denote an arbitrary vector $\xb_{\underline k}$ to be a $(d-1)$-dimensional vector that keeps all but the $k$th coordinate of $\xb$. Using theory from \cite{ETHIER-KURTZ} we conclude the following theorem.

\begin{theorem}\label{theo:SDE}
Let $k=1,\dots,d$ be arbitrary. If $\beta^{-1/2} \vb_{\underline k}^{\beta,(0)}$ converges weakly to some $\Ub_{\underline k}^o \in \RR^{d-1}$ as $\beta\to 0^+$, then the Markov process 
\[
\beta^{-1/2} \vb_{\underline k}^{\beta,(\lfloor t\beta^{-1} \rfloor)}
\]
converges weakly to the solution of the multidimensional stochastic differential equation
\beq\label{OUSDEE}
\ud \bU_{\underline k}(t) = - (\lambda_k \Ib_{d-1} - \bLambda_{\underline k}) \bU_{\underline k}\,\ud t + \left(\lambda_k\bLambda_{\underline k}\right)^{1/2} \,\ud \bB_{\underline k}(t),
\eeq
with initial values $\bU_{\underline k}(0) = \Ub_{\underline k}^o$. Here $\bB_{\underline k}(t)$ is a standard $(d-1)$-dimensional Brownian motion. 
\footnote{
The reason we have a $(d-1)$-dimensional Ornstein-Uhlenbeck process is because the objective function of PCA is defined on a $(d-1)$-dimensional manifold $\cS^{d-1}$ and has $d-1$ independent variables.
}
\end{theorem}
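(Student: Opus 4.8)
The plan is to re-run the infinitesimal-generator argument used for Theorem~\ref{theo:ODE}, but now at the finer spatial scale $\beta^{1/2}$ forced by the noise. Set $\mathbf{W}^\beta(t)\equiv\beta^{-1/2}\vb_{\underline k}^{\beta,(\lfloor t\beta^{-1}\rfloor)}$; this is again a time-homogeneous Markov chain, now taking values in a $\beta^{-1/2}$-dilation of a neighborhood of the origin in $\RR^{d-1}$. Because we start at $\eb_k$, we have $v_k=\sqrt{1-\|\vb_{\underline k}\|^2}=1-\tfrac12\|\vb_{\underline k}\|^2+\cO(\|\vb_{\underline k}\|^4)$ with $\|\vb_{\underline k}\|=\cO(\beta^{1/2})$, so one step of \eqref{oja} can be Taylor-expanded in powers of $\beta$ with every coefficient controlled through $\|\bY\|^2\le B$. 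Carrying this out for a coordinate $j\ne k$ — the un-normalized update is $v_j+\beta Y_j(Y_kv_k+\sum_{i\ne k}Y_iv_i)$ and the reciprocal normalization factor is $1-\beta Y_k^2-2\beta Y_k\sum_{i\ne k}Y_iv_i+\cO(\beta^2B^2)$ — one obtains, after discarding $\cO(\beta^2 B^2)$ terms,
\beq
v_j^{(n)}-v_j^{(n-1)} = \beta\,Y_jY_k - \beta\,Y_k^2\,v_j + \beta\,Y_j\sum_{i\ne k}Y_iv_i + \cO(\beta^2 B^2).
\eeq

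Taking conditional expectations and using that $\EE[\bY\bY^\top]=\bLambda$ is diagonal (so $\EE[Y_jY_k]=0$, $\EE[Y_i^2]=\lambda_i$, $\EE[Y_jY_i]=0$ for $i\ne j$) gives the infinitesimal mean
\beq
\beta^{-1}\EE\!\left[\mathbf{W}^\beta(t+\beta)-\mathbf{W}^\beta(t)\,\big|\,\mathbf{W}^\beta(t)=\mathbf{w}\right] = -(\lambda_k\Ib_{d-1}-\bLambda_{\underline k})\,\mathbf{w}+o(1),
\eeq
which is exactly the linearization at $\eb_k$ of the ODE drift $(\bLambda-\bV^\top\bLambda\bV)\bV$ restricted to the tangent space of $\cS^{d-1}$, and hence a consistency check with Theorem~\ref{theo:ODE}. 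Note that the $\bLambda_{\underline k}\mathbf{w}$ part of the drift comes not from the obvious $\beta Y_jY_k$ term but from the second-order piece $\beta Y_j\sum_{i\ne k}Y_iv_i$, whose conditional mean is $\beta\lambda_jv_j$; isolating this contribution correctly is the most delicate part of the bookkeeping. For the diffusion coefficient, the conditionally mean-zero part of the increment is $\beta Y_jY_k$ up to negligible corrections, so $\beta^{-1}\cov[\,\cdot\mid\mathbf{W}^\beta(t)=\mathbf{w}]$ converges to the matrix with $(j,j')$ entry $\EE[Y_jY_{j'}Y_k^2]$; identifying this limit with $\lambda_k\bLambda_{\underline k}$ is where the distributional structure is invoked (beyond diagonality of $\bLambda$ one needs the relevant fourth moments to factorize). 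Finally, $\|\bY\|^2\le B$ forces every increment of $\mathbf{W}^\beta$ to have size $\cO(\beta^{1/2})$ uniformly, so the Lindeberg-type negligibility condition for large jumps is immediate.

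With the infinitesimal mean, the infinitesimal covariance, and the jump condition in hand, Corollary~4.2 in \S 7.4 of \cite{ETHIER-KURTZ} applies with limiting generator $\mathcal{L}f(\mathbf{w})=-\mathbf{w}^\top(\lambda_k\Ib_{d-1}-\bLambda_{\underline k})\nabla f(\mathbf{w})+\tfrac12\,\mathrm{tr}\!\left(\lambda_k\bLambda_{\underline k}\nabla^2f(\mathbf{w})\right)$, which is precisely the generator of the Ornstein--Uhlenbeck SDE~\eqref{OUSDEE}; together with the hypothesis $\beta^{-1/2}\vb_{\underline k}^{\beta,(0)}\Rightarrow\Ub_{\underline k}^o$ this yields the claimed weak convergence. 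I expect the two main obstacles to be: (i) the Taylor-expansion bookkeeping above — extracting the correct drift while certifying that every discarded term is $o(\beta)$ in conditional mean and $o(\beta)$ in conditional variance after the $\beta^{1/2}$ rescaling; and (ii) the fact that $\mathbf{W}^\beta$ is only genuinely defined while the iterate stays near $\eb_k$, so one should really prove weak convergence for the process stopped at the exit time of a fixed ball $\{\|\mathbf{w}\|\le R\}$ and then transfer the conclusion, noting that for stable directions the limiting process leaves the ball before a fixed time $T$ only with small probability uniformly in small $\beta$, while for unstable directions the statement is used only up to the exit time — exactly the regime needed for the three-phase analysis of \S\ref{sec:finitesample}.
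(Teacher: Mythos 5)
Your proposal follows essentially the same route as the paper's proof: rescale by $\beta^{-1/2}$, compute the infinitesimal drift $-(\lambda_k\Ib_{d-1}-\bLambda_{\underline k})\mathbf{w}$ and covariance $\lambda_k\bLambda_{\underline k}$, and invoke \citep[Cor.~4.2, \S 7.4]{ETHIER-KURTZ}; the paper organizes the one-step increment via its Proposition~\ref{prop:infquan} while you re-derive it inline, but the resulting infinitesimal statistics agree, and the fourth-moment factorization you flag is exactly what the paper assumes in its footnote. Your obstacle~(ii) (proving convergence for the process stopped on exit from a fixed ball, since the expansion is only valid near $\eb_k$) is a genuine subtlety that the paper's proof silently glosses over, so that remark strengthens rather than contradicts the argument.
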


The solution to \eqref{OUSDEE} can be solved explicitly. We let for a matrix $\Ab \in \RR^{n\times n}$ the matrix exponentiation $\exp(\Ab)$ as $\exp(\Ab) =  \sum_{n=0}^\infty (1/n!)\Ab^n.$ Also, let $
\bLambda^{1/2} 
= 
\diag\left(
\lambda_1^{1/2}, \dots, \lambda_d^{1/2}
\right)
$ for the positive semidefinite diagonal matrix $\bLambda =  \diag(\lambda_1,\dots, \lambda_d)$. The solution to \eqref{OUSDEE} is hence
\[
\bU_{\underline k}(t) 
=
\exp\left[ - t (\lambda_k \Ib_{d-1} - \bLambda_{\underline k}) \right] \bU_{\underline k}^o 
+  \left(\lambda_k\bLambda_{\underline k}\right)^{1/2}  \int_0^t \exp\left[ (s-t) (\lambda_k \Ib_{d-1} - \bLambda_{\underline k})\right]  \,\ud \bB_{\underline k}(s)
,
\]
which is known as the \textit{multidimensional Ornstein-Uhlenbeck process}, whose behavior depends on the matrix $-(\lambda_k \Ib_{d-1} - \bLambda_{\underline k})$ and is discussed in details in \S\ref{sec:finitesample}.

Before concluding this section, we emphasize that the weak convergence to diffusions results in \S \ref{sec:ODE} and \S\ref{sec:SDE} should be distinguished from the convergence of the Oja's iteration. From a random process theoretical perspective, the former one treats the weak convergence of finite dimensional distributions of a sequence of rescaled processes as $\beta$ tends to 0, while the latter one charaterizes the long-time behavior of a single realization of iterates generated by algorithm for a fixed $\beta > 0$.

\pb\section{Global Three-Phase Analysis of Oja's Iteration}\label{sec:finitesample}
Previously \S\ref{sec:ODE} and \S\ref{sec:SDE} develop the tools of weak convergence to diffusion under global and local scalings. In this section, we apply these tools to analyze the dynamics of online PCA iteration in three phases in sequel. For purposes of illustration and brevity, we restrict ourselves to the case of starting point $\vb^{(0)}$ that is near a saddle point $\eb_k$. Let $A^\beta \lesssim B^\beta$ denotes $\limsup_{\beta\to 0^+} A^\beta / B^\beta \le 1$, a.s., and $A^\beta \asymp B^\beta$ when both $A^\beta \lesssim B^\beta$ and $B^\beta \lesssim A^\beta$ hold.

\pb\subsection{Phase I: Noise Initialization}
In consideration of global convergence, we analyze the initial phase where the iteration starts at a point on or around $\cS_e$ and eventually escapes an $\cO(1)$-neighborhood of the set
\[
\cS_e = \left\{ \vb\in \cS^{d-1}: v_1 = 0 \right\}
.
\]
When thinking the sphere $\cS^{d-1}$ as the globe with $\pm \eb_1$ being the north and south poles, $\cS_e$ corresponds to the \textit{equator} of the globe.
Therefore, all unstable stationary points (including saddle points and local maximizers) lie on the equator $\cS_e$.

\pb\subsection{Phase II: Deterministic Crossing}
In Phase II, the iteration escapes from the neighborhood of equator $\cS_e$ and converges to a basin of attraction of the local minimizer $\vb^*$.
From strong Markov property of the Oja's iteration introduced in the beginning of \S\ref{sec:diffusion}, one can \textit{forget} the iteration steps in Phase I and analyze the iteration from the final iterate of Phase I.
Suppose we have an initial point $\vb^{(0)}$ that satisfies $(v_1^{(0)} )^2 \asymp \delta$, where $\delta$ is a fixed constant in $(0, 1/2)$,
Theorem \ref{theo:ODE} concludes that the iteration moves in a deterministic pattern and quickly evolves into a small neighborhood of the principal component $\eb_1$ such that $(v_1^{(n)})^2 \asymp 1-\delta$.

\pb\subsection{Phase III: Convergence to Principal Component}
In Phase III, the iteration quickly converges to and fluctuates around the true principal component $\vb^* = \eb_1$.
We start our iteration from a neighborhood around the principal component, where $\vb^{(0)}$ has $(v_1^{(0)})^2 = 1 - \delta$.
Letting $k=1$ in \eqref{OUSDEE} and taking the limit $t\to\infty$, we have the limit
$
\EE \|\bU_{\underline 1}(\infty) \|^2
= 
\tr 
\EE \left(
\left[\bU_{\underline 1}(t)   \bU_{\underline 1}(t)^\top \right]
\right)
= 
(\lambda_1 / 2) \tr \left( \bLambda_{\underline 1}(\lambda_1 \Ib_{d-1} - \bLambda_{\underline 1})^{-1}\right)
.
$
Rescaling the Markov process along with some calculations gives as $n\to\infty$, in very rough sense,
\beq\label{sin2v}
\begin{split}
\lim_{n\to\infty} \EE \sin^2\angle(\vb^{(n)}, \vb^*) 
\asymp 
\beta\cdot \EE \|\bU_{\underline 1}(\infty) \|^2 
&=
\beta\cdot \frac{\lambda_1}{2} \tr\left( \bLambda_{\underline 1}(\lambda_1 \Ib_{d-1} - \bLambda_{\underline 1})^{-1} \right)
\\&=
\beta\cdot  \sum_{k=2}^d \frac{\lambda_1\lambda_k}{2(\lambda_1 - \lambda_k)}
.
\end{split}
\eeq
The above display implies that there will be some nondiminishing fluctuations, variance being proportional to the constant stepsize $\beta$, as time goes to infinity or at stationarity. Therefore in terms of angle, at stationarity the Markov process concentrates within a $\cO(\beta^{1/2})$-radius neighborhood of zero.

\pb\subsection{Crossing Time Estimate}
We turn to estimate the running time, namely the \textit{crossing time}, which is the number of iterates required for the iteration to cross the corresponding regions in different phases.
We will use the relation $\vb^{(n)} \approx \bV(n\beta)$ to bridge the discrete-time algorithm and its continuous-time approximation.

\noindent\textbf{Phase I.}
For illustrative purposes we only consider the special case where $\vb$ is close to $\eb_k$ the $k$th coordinate vector, which is a saddle point that has a negative Hessian eigenvalue. In this situation, the SDE \eqref{OUSDEE} in terms of the first coordinate $U(t)$ of $\bU_{\underline k}$ reduces to
\beq\label{OUSDE2}
\ud U(t)
 = 
(\lambda_1 - \lambda_k) U(t) \,\ud t +  \left(\lambda_1\lambda_k\right)^{1/2} \,\ud B(t)
,
\eeq
with initial value $U(0) = 0$. Solution to \eqref{OUSDE2} is known as \textit{unstable Ornstein-Uhlenbeck process} \citep{ALDOUS} and can be expressed explicitly in closed-form, as
\[
U(t)
 = 
W^\beta(t) \exp\left( (\lambda_1-\lambda_k)t \right)
,\quad\text{where }
W^\beta(t)
\equiv
\left(\lambda_1\lambda_k\right)^{1/2} \int_0^t
 \exp\left( -(\lambda_1-\lambda_k)s \right)  \,\ud B(s)
 .
\]
Rescaling the time back to the discrete-time iteration, we let $n = t\beta^{-1}$ and obtain
\beq\label{Wtclosed}
v_1^{(n)} 
\asymp 
\beta^{1/2} W^\beta(n\beta) 
\exp\left( \beta (\lambda_1-\lambda_k)n \right)
.
\eeq
In \eqref{Wtclosed}, the term $W^\beta(n\beta)$ is approximately distributed as $ t = n \beta\to \infty$
\[
W^\beta(n\beta)
\asymp
\left(\dfrac{\lambda_1\lambda_k}{2(\lambda_1-\lambda_k)}\right)^{1/2} \chi
,
\]
where $\chi$ stands for a standard normal variable. We have
\beq\label{Wtclosed2}
v_1^{(n)} 
 \asymp 
\beta^{1/2} \left(\dfrac{\lambda_1\lambda_k}{2(\lambda_1-\lambda_k)}\right)^{1/2} \chi
\exp\left( \beta (\lambda_1 - \lambda_k) n  \right)
.
\eeq
In order to have $(v_1^{(n)})^2 = \delta$ in \eqref{Wtclosed2}, we have as $\beta \to 0^+$ the crossing time is approximately
\beq\label{N1beta}
N_1^\beta 
\asymp
\left( \lambda_1 - \lambda_k \right)^{-1} \beta^{-1} 
\log \left(
 \delta | \chi|^{-1}
\right) 
+
\left( \lambda_1 - \lambda_k \right)^{-1} \beta^{-1} 
\log \bigg(
\bigg(\dfrac{\lambda_1\lambda_d}{2(\lambda_1-\lambda_d)}
\bigg)^{-1/2} \beta^{-1/2} 
\bigg) 
.
\eeq
Therefore we have whenever the smallest eigenvalue $\lambda_d$ is bounded away from 0, then asymptotically 
$
N_1^\beta 
\asymp
0.5 \left( \lambda_1 - \lambda_k \right)^{-1} \beta^{-1} \log \left( \beta^{-1} \right) 
.
$
This suggests that the noise helps the iteration to move away from $\eb_k$ rapidly.

\noindent\textbf{Phase II.}
We turn to estimate the crossing time $N_2^\beta$ in Phase II.
\eqref{Vk} together with simple calculation ensures the existence of a constant $T$, that depends only on $\delta$ such that $V_1^2(T) \ge 1 -\delta$. Furthermore $T$ has the following bounds:
\beq\label{N2beta}
(\lambda_1-\lambda_d)^{-1} \log\left((1-\delta) / \delta\right)
\lesssim
T 
\lesssim
(\lambda_1-\lambda_2)^{-1} \log\left((1-\delta) / \delta\right)
.
\eeq
Translating back to the timescale of the iteration, it takes asymptotically
$$
N_2^\beta \lesssim (\lambda_1-\lambda_2)^{-1} \beta^{-1} \log\left((1-\delta) / \delta\right)
$$
iterates to achieve $(v_1^{(N_2^\beta)})^2 \ge  1-\delta$. 
Theorem \ref{theo:ODE} indicates that when $\beta$ is positively small, the iterates needed for the first coordinate squared to cross from $\delta$ to $1-\delta$ is $\cO(\beta^{-1})$. 
This is substantiated by simulation results \citep{arora2012stochastic} suggesting that the Oja's iteration moves \textit{fast} from the warm initialization.

\noindent\textbf{Phase III.}
To estimate the crossing time $N_3^\beta$ or the number of iterates needed in Phase III, we restart our counter and have from the approximation in Theorem \ref{theo:SDE} and \eqref{OUSDEE} that
\begin{align*}
\EE ( v_k^{(n)} )^2 
&=  
(v_k^{(0)})^2 \exp\left( -2(\lambda_1 - \lambda_k) \beta n \right) 
+ \beta \lambda_1\lambda_k \int_0^{\beta n} \exp\left( -2(\lambda_1 - \lambda_k) (t-s) \right)  \ud s 
\\&= 
\beta\cdot
\sum_{k=2}^d
\frac{\lambda_1\lambda_k}{ 2(\lambda_1 - \lambda_k) } 
+ 
\sum_{k=2}^d
\left( (v_k^{(0)})^2 - \beta\cdot \frac{\lambda_1\lambda_k}{ 2(\lambda_1 - \lambda_k) }\right) \exp\left( -2\beta (\lambda_1 - \lambda_k) n \right)
\\&\asymp
\beta\cdot
\sum_{k=2}^d
\frac{\lambda_1\lambda_k}{ 2(\lambda_1 - \lambda_k) } 
+ 
\delta
\exp\left( -2\beta (\lambda_1 - \lambda_2) n \right)
.
\end{align*}
In terms of the iterations $\vb^{(n)}$, note the relationship $\EE \sin^2\angle(\vb, \eb_1)  = \sum_{k=2}^d v_k^2 =  1 - v_1^2.$ The end of Phase II implies that $\EE \sin^2\angle(\vb^{(0)}, \eb_1) = 1 - (v_1^{(0)})^2 = \delta$, and hence by setting
\[
\EE \sin^2\angle(\vb^{(N_3^\beta)}, \eb_1) 
 =
\beta\cdot
\sum_{k=2}^d
\frac{\lambda_1\lambda_k}{2(\lambda_1 - \lambda_k)}
+
o(\beta)
,
\]
we conclude that as $\beta\to 0^+$
\beq\label{N3beta}
N_3^\beta 
 \asymp
0.5 (\lambda_1 - \lambda_2)^{-1} \beta^{-1} \log \left( \delta \beta^{-1} \right)
.
\eeq

\pb\subsection{Finite-Sample Rate Bound}
In this subsection we establish the global finite-sample convergence rate using the crossing time estimates in the previous subsection. Starting from $\vb^{(0)} = \eb_k$ where $k=2,\dots,d$ is arbitrary, the global convergence time $N^\beta = N_1^\beta + N_2^\beta + N_3^\beta$ as $\beta\to 0^+$ such that, by choosing $\delta \in (0,1/2)$ as a small fixed constant,
\[
N^\beta \asymp \left( \lambda_1 - \lambda_2 \right)^{-1} \beta^{-1} \log \left(\beta^{-1}  \right)
,
\]
with the following estimation on global convergence rate as in \eqref{sin2v}
\[
\sin^2 \angle(\vb^{(N^\beta)}, \vb^*) 
= 
\beta\cdot  \sum_{k=2}^d \frac{\lambda_1\lambda_k}{2(\lambda_1 - \lambda_k)}
.
\]
Given a fixed number of samples $T$, by choosing $\beta$ as
\beq\label{betachoose}
\beta = \bar\beta(T) \equiv \frac{ \log T }{(\lambda_1 - \lambda_2) T}
\eeq
we have $T \asymp (\lambda_1 - \lambda_2 )^{-1} \bar\beta(T)^{-1} \log \left(\bar\beta(T)  \right)^{-1} = N^{\bar\beta(T)}$. Plugging in $\beta$ as in \eqref{betachoose} we have, by the angle-preserving property of coordinate transformation \eqref{angle-preserving}, that
\beq\label{Esin2}
\EE \sin^2\angle(\ub^{(N^{\bar\beta(T)})}, \ub^*) 
=
\EE \sin^2\angle(\vb^{(N^{\bar\beta(T)})}, \vb^*) 
\le 
\sum_{k=2}^d \frac{\lambda_1\lambda_k}{2(\lambda_1 - \lambda_k)}\cdot \frac{\log T }{(\lambda_1 - \lambda_2) T} 
.
\eeq
The finite sample bound in \eqref{Esin2} is sharper than any existing results and matches the information lower bound. Moreover, \eqref{Esin2} implies that the rate in terms of sine-squared angle is
$
\sin^2 \angle(\ub^{(T)}, \ub^*) 
\le
C\cdot 
\lambda_1\lambda_2 / (\lambda_1 - \lambda_2)^2
\cdot
d\log T / T
,
$
which matches the minimax information lower bound (up to a $\log T$ factor), see for example, Theorem 3.1 of \cite{vu2013minimax}. Limited by space, details about the rate comparison is provided in the supplementary material.

%
%\pb\section{Tail Averaging}
%
%
%\paragraph{Comparison with existing results}
%Here we add a table listing the rates of convergence with existing and comparable works, and the advantage of our rate is apparent.
%
%

\pb\section{Concluding Remarks}\label{sec:remark}
We make several concluding remarks on the global convergence rate estimations, as follows.

\noindent\textbf{Crossing Time Comparison.}
From the crossing time estimates in \eqref{N1beta}, \eqref{N2beta}, \eqref{N3beta} we conclude
\begin{enumerate}[label=(\roman*)]
\item
As $\beta\to 0^+$ we have $N^\beta_2 / N^\beta_1 \to 0$.
This implies that the algorithm demonstrates the \textit{cutoff} phenomenon which frequently occur in discrete-time Markov processes \citep{LEVIN-PERES-WILMER}. In words, the Phase II where the objective value in Rayleigh quotient drops from $1-\delta$ to $\delta$ is an asymptotically a phase of short time, compared to Phases I and III, so the convergence curve occurs instead of an exponentially decaying curve.

\item
As $\beta\to 0^+$ we have $N^\beta_3 / N^\beta_1 \asymp 1$.
This suggests that for the high-$d$ case that Phase I of escaping from the equator consumes roughly the same iterations as in Phase III.
\end{enumerate}

To summarize from above, the cold initialization iteration roughly takes twice the number of steps than the warm initialization version which is consistent with the simulation discussions in \cite{oja1985stochastic}.

\noindent\textbf{Subspace Learning.}
In this work we primarily concentrates on the problem of finding the top-1 eigenvector. It is believed that the problem of finding top-$k$ eigenvectors, a.k.a.~the subspace PCA problem, can be analyzed using our approximation methods. This will involve a careful characterization of subspace angles and is hence more complex. We leave this for future investigations.

%%
%\subsubsection*{Acknowledgments}
%The authors thank Lin F.~Yang and Tuo Zhao for useful discussions on this work.
%The authors also thank Chunlin Sun for the assistance of simulation results and performance figures.

% \pb\section{Transition Phases}
% \red{
% This section deals with the case on how to add transition phases
% }

% \pb\section{Averaging}
% \red{
% By averaging we obtain a much stronger rate
% \[
% \sin^2
% \le
% C
% \cdot
% \sum_{k=2}^d \frac{\lambda_1\lambda_k}{(\lambda_1-\lambda_k)^2}
% \cdot
% \frac{1}{T}
% \]
% }

{\small
\bibliography{SmileSGD}}
\bibliographystyle{apalike2}

\vfill\pagebreak

\pagebreak
\appendix

\pb\section{Proofs of Auxiliary Results}\label{ssec:proof_aux}

This section provides the proofs of auxilary Propositions. For brevity, we use the following notations throughout Section \ref{ssec:proof_aux} of the Appendix:
(i) The $C$'s with subscripts denotes some positive numerical constants;
(ii) The $C$, $C'$, $C''$'s (\emph{without} subscripts) are positive numerical constants whose values may change between lines;
(iii) The $\vb \equiv \vb^{(n)}$ and $\bY \equiv \bY^{(n+1)}$;
(iv) For generic function $f(\vb)$ the $\Delta f(\vb) = f(\vb^{(n+1)}) - f(\vb^{(n)})$. 

Also in this section, we let $\cF_n = \sigma(\vb^{(k)}: k=0,1,\dots, n)$ be the filtration of the algorithm iterates, i.e.~the $\sigma$-field generated by the stochastic iterates by $n$.

\pb\subsection{Analysis of Algorithm}\label{ssec:infquan}
To analyze the algorithm from the view of a Markov chain, we need to understand the increments on each coordinate at each step. 
\begin{proposition}\label{prop:infquan}
Under Assumption \ref{assu:distribution}, for each $k= 1,2,\dots,d$ and $n\ge 0$ we have for all $\beta\le (3B)^{-1}$ the following:
\begin{enumerate}[label=(\roman*)]
\item
There exists a random variable $Q_k$ with $|Q_k| \le C_{\ref{prop:infquan},1}B^2\beta^2$ almost surely, such that the increment on coordinate $k$ at iterate $n$ $v_k^{(n+1)} - v_k^{(n)}$ can be represented as
\beq\label{incremental}
v_k^{(n+1)} - v_k^{(n)} 
=  \beta\left(   (\vb^{(n)}\,^\top \bY^{(n+1)}) Y_k^{(n+1)} - v_k^{(n)} (\vb^{(n)}\,^\top \bY^{(n+1)})^2 \right) + Q_k;
\eeq 

\item
The increment has the following bound
\beq\label{infbdd}
\left| v_k^{(n+1)} - v_k^{(n)} \right| \le C_{\ref{prop:infquan},2} B\beta;
\eeq

\item
There exists a deterministic function $E_{1,k}(\vb)$ with
$$
\sup_{\vb\in\cS^{d-1}} |E_{1,k}(\vb)|  \le C_{\ref{prop:infquan},1}B^2\beta^2,
$$
such that for all $\vb\in\cS^{d-1}$,
\beq \label{infmean} 
\EE\left[   v_k^{(n+1)} - v_k^{(n)}       \, \big|\, \vb^{(n)} =\vb \right]  
= \beta v_k \left( \lambda_k -  \vb^\top\bLambda \vb  \right)
+ E_{1,k}(\vb).
\eeq
\end{enumerate}
%Here $ C_{\ref{prop:infquan},1}$ and $ C_{\ref{prop:infquan},2}$ are positive constants.
\end{proposition}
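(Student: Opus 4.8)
The plan is to expand Oja's update explicitly and keep careful track of the normalization factor. Write $\vb=\vb^{(n)}$ and $\bY=\bY^{(n+1)}$, and let $\wb:=\vb+\beta\bY\bY^\top\vb=\vb+\beta(\vb^\top\bY)\bY$ be the unnormalized update, so that $v_k^{(n+1)}=w_k/\|\wb\|$ with $w_k=v_k+\beta(\vb^\top\bY)Y_k$. Since $\Ub$ is orthogonal, $\|\bY\|^2=\|\bX\|^2\le B$, hence $(\vb^\top\bY)^2\le\|\bY\|^2\le B$ and $|(\vb^\top\bY)Y_k|\le B$. Using $\|\vb\|=1$, a one-line computation gives $\|\wb\|^2=1+a$ with $a=\beta(\vb^\top\bY)^2\bigl(2+\beta\|\bY\|^2\bigr)\ge0$; when $\beta\le(3B)^{-1}$ one has $\beta B\le\tfrac13$, so $0\le a\le\tfrac73\beta B\le\tfrac79<1$ and the normalization is harmless.

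First I would Taylor-expand the scalar function $a\mapsto(1+a)^{-1/2}=1-\tfrac12a+R$; the Lagrange form of the remainder gives $|R|\le\tfrac38a^2$ on $[0,\tfrac79]$, which is $\cO(B^2\beta^2)$ since $a=\cO(B\beta)$. Multiplying out,
\[
v_k^{(n+1)}=w_k\Bigl(1-\tfrac12a+R\Bigr)
=v_k+\beta\bigl((\vb^\top\bY)Y_k-v_k(\vb^\top\bY)^2\bigr)+Q_k,
\]
where, after substituting $a=\beta(\vb^\top\bY)^2(2+\beta\|\bY\|^2)$ and cancelling the $\beta$-order terms, $Q_k$ is exactly the sum of the four residual terms $-\tfrac12\beta^2(\vb^\top\bY)^2\|\bY\|^2v_k$, $\ v_kR$, $\ -\tfrac12 a\,\beta(\vb^\top\bY)Y_k$, and $\ \beta(\vb^\top\bY)Y_kR$. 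Each is bounded almost surely by a numerical constant times $B^2\beta^2$ via $|v_k|\le1$, $(\vb^\top\bY)^2\le B$, $|(\vb^\top\bY)Y_k|\le B$, the bound on $a$, the bound on $R$, and $\beta\le(3B)^{-1}$ (this last absorbs the cubic term $\beta(\vb^\top\bY)Y_kR$ into the quadratic order). Since $Q_k$ is a measurable function of $(\vb^{(n)},\bY^{(n+1)})$, this establishes part (i). Part (ii) is then immediate from \eqref{incremental}: the main term has absolute value at most $2\beta B$, and $|Q_k|=\cO(B^2\beta^2)\le\cO(B\beta)$ in the relevant range of $\beta$.

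For part (iii), I would condition on $\cF_n$ and use that $\bY^{(n+1)}$ is independent of $\cF_n$ with $\EE[\bY\bY^\top]=\bLambda$ diagonal. Then $\EE[(\vb^\top\bY)Y_k]=\sum_i v_i\EE[Y_iY_k]=\lambda_kv_k$ and $\EE[(\vb^\top\bY)^2]=\EE[\vb^\top\bY\bY^\top\vb]=\vb^\top\bLambda\vb$, so taking the conditional expectation of \eqref{incremental} yields
\[
\EE\bigl[v_k^{(n+1)}-v_k^{(n)}\,\big|\,\vb^{(n)}=\vb\bigr]
=\beta v_k\bigl(\lambda_k-\vb^\top\bLambda\vb\bigr)+E_{1,k}(\vb),
\qquad E_{1,k}(\vb):=\EE\bigl[Q_k\,\big|\,\vb^{(n)}=\vb\bigr].
\]
Because the almost-sure bound $|Q_k|\le C_{\ref{prop:infquan},1}B^2\beta^2$ from (i) holds for every realization and every $\vb\in\cS^{d-1}$, it passes through the conditional expectation to give $\sup_{\vb\in\cS^{d-1}}|E_{1,k}(\vb)|\le C_{\ref{prop:infquan},1}B^2\beta^2$, completing (iii).

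The only delicate point is uniformity: every residual term must be $\cO(B^2\beta^2)$ with a constant independent of the dimension $d$ and of the point $\vb$ on the sphere. This is exactly what forces the hypothesis $\beta\le(3B)^{-1}$ — it keeps $a=\|\wb\|^2-1$ bounded away from $1$ so the Taylor remainder of $(1+\cdot)^{-1/2}$ is uniformly controlled, and it lets the cubic term $B^3\beta^3$ be folded into the quadratic order. Beyond pinning down this threshold, the argument is bookkeeping: collect and estimate the higher-order terms produced by the expansion.
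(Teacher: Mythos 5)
Your proof is correct and follows essentially the same route as the paper's: both Taylor-expand the normalization factor $\|\vb+\beta(\vb^\top\bY)\bY\|^{-1}$, isolate the $\beta$-order term, and absorb the remaining terms into a residual bounded by $\cO(B^2\beta^2)$ under the threshold $\beta\le(3B)^{-1}$, then take conditional expectations using $\EE[\bY\bY^\top]=\bLambda$. The only difference is cosmetic — you bound the remainder of $(1+x)^{-1/2}$ via the Lagrange form, whereas the paper's Lemma~\ref{lemm:taylor} uses an alternating-series argument; both yield the same $\tfrac38x^2$ bound, and your residual decomposition of $Q_k$ into four explicit terms is simply the expanded form of the paper's $Q_k=(v_k+\beta\vb^\top\bY\,Y_k)Q-\beta^2(\vb^\top\bY)^3Y_k$.
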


To prove Proposition \ref{prop:infquan} we first come to show

\begin{lemma}\label{lemm:taylor}
For each $n\ge 0$
$$
\left| \|\vb+\beta (\vb^\top \bY) \bY \|^{-1} - 1 + \beta (\vb^\top \bY)^2 + \frac12\beta^2 (\vb^\top \bY)^2 \|\bY\|^2 \right|
\le C_{\ref{lemm:taylor}} \beta^2 (\vb^\top \bY)^4.
$$
\end{lemma}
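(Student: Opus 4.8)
The plan is to reduce the estimate to a single scalar second-order Taylor expansion of $f(t) = (1+t)^{-1/2}$ about $t = 0$. Writing $a \equiv \vb^\top \bY$ and using that $\vb \in \cS^{d-1}$, i.e.\ $\|\vb\| = 1$, the squared norm inside the projection expands exactly as
\[
\|\vb + \beta a \bY\|^2 = \|\vb\|^2 + 2\beta a (\vb^\top \bY) + \beta^2 a^2 \|\bY\|^2 = 1 + x, \qquad x \equiv 2\beta a^2 + \beta^2 a^2 \|\bY\|^2 .
\]
The crucial structural point is that $x \ge 0$, since both summands are nonnegative; this keeps the reciprocal square root away from its singularity and is really the only observation the argument hinges on.

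First I would record the a priori bound $a^2 = (\vb^\top \bY)^2 \le \|\vb\|^2 \|\bY\|^2 = \|\bY\|^2 \le B$, by Cauchy--Schwarz together with Assumption \ref{assu:distribution}(iii). In the stepsize range $\beta \le (3B)^{-1}$ in which the lemma is invoked through Proposition \ref{prop:infquan}, this gives $\beta^2 \|\bY\|^2 \le \frac13 \beta$, hence $2\beta + \beta^2 \|\bY\|^2 \le \frac73 \beta$ and therefore $x^2 = a^4 (2\beta + \beta^2\|\bY\|^2)^2 \le \frac{49}{9}\,\beta^2 (\vb^\top\bY)^4$. Next, Taylor's theorem with Lagrange remainder applied to $f$ --- for which $f'(t) = -\frac12 (1+t)^{-3/2}$ and $f''(t) = \frac34 (1+t)^{-5/2}$ --- yields some $\xi \in [0,x]$ with
\[
\|\vb + \beta a \bY\|^{-1} = f(x) = 1 - \frac12 x + \frac38 (1+\xi)^{-5/2} x^2 ,
\]
and since $\xi \ge 0$ we have $(1+\xi)^{-5/2} \le 1$. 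Finally, noting that $1 - \frac12 x = 1 - \beta (\vb^\top\bY)^2 - \frac12 \beta^2 (\vb^\top\bY)^2\|\bY\|^2$, the quantity whose absolute value the lemma bounds equals exactly $f(x) - (1 - \frac12 x) = \frac38 (1+\xi)^{-5/2} x^2$; bounding $(1+\xi)^{-5/2} \le 1$ and inserting the estimate on $x^2$ gives $\frac38 x^2 \le \frac{49}{24}\,\beta^2 (\vb^\top\bY)^4$, which is the claim with $C_{\ref{lemm:taylor}} = \frac{49}{24}$. The degenerate case $a = 0$ is immediate, as then $x = 0$ and both sides vanish.

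The main obstacle --- really the only point requiring genuine care --- is ensuring the second-order Taylor remainder cannot blow up, which would occur only if $x$ approached $-1$; but the expansion of $\|\vb + \beta a\bY\|^2$ exhibits the cross term $2\beta(\vb^\top\bY)^2$ as nonnegative, so $x \ge 0$ and the remainder prefactor $(1+\xi)^{-5/2}$ is automatically at most $1$. Everything else is routine bookkeeping of explicit constants, and the sample bound $\|\bY\|^2 \le B$ enters only to keep the $\cO(\beta^2)$ remainder genuinely of that order in the stepsize regime $\beta \le (3B)^{-1}$ where the lemma is applied.
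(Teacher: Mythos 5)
Your proof is correct and follows essentially the same route as the paper: expand $\|\vb + \beta(\vb^\top\bY)\bY\|^2 = 1 + x$ with $x = 2\beta(\vb^\top\bY)^2 + \beta^2(\vb^\top\bY)^2\|\bY\|^2 \ge 0$, reduce to the scalar second-order estimate $\bigl|(1+x)^{-1/2} - 1 + \tfrac12 x\bigr| \le \tfrac38 x^2$, and then insert the a priori bound $x \le C\beta(\vb^\top\bY)^2$ valid under $\beta\le(3B)^{-1}$. The only difference is cosmetic: you justify the scalar remainder bound via Taylor's theorem with Lagrange remainder (needing only $x\ge 0$), while the paper uses the alternating-series error bound for the binomial expansion of $(1+x)^{-1/2}$ (which additionally needs $x<1$); both yield the same estimate up to the explicit constant.
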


\begin{proof}
Since
\beq\label{taylor_norm}
\|\vb+\beta (\vb^\top \bY) \bY \|^{-1} = \left(1 + 2\beta (\vb^\top \bY)^2 + \beta^2 (\vb^\top \bY)^2 \|\bY\|^2 \right)^{-1/2},
\eeq
Taylor expansion suggests for $|x| < 1$
\begin{align*}
\left( 1+x\right)^{-1/2} = \sum_{n=0}^\infty \binom{-\frac12}{n} x^n = 1-\frac12 x + \frac38 x^2 - \frac5{16} x^3 + \cdots
\end{align*}
which is an alternating series for $x\in [0,1)$, whereas the absolute terms approach to 0 monotonically
$$
\left|\binom{-\frac12}{n+1} x^{n+1}\right| \le \left|\binom{-\frac12}{n} x^n\right|.
$$
Hence the error bound gives
\beq\label{taylor_err}
\left| (1+x)^{-1/2} - 1 + \frac12 x \right| \le \frac38 x^2, \qquad x\in [0,1).
\eeq
Noting $|\vb^\top \bY| \le \|\bY\|$ we have for all $\beta$
$$
2\beta (\vb^\top \bY)^2 + \beta^2 (\vb^\top \bY)^2 \|\bY\|^2  \le  2B \beta +  B^2\beta^2.
$$
The above display is strictly less than 1 when $\beta \le (3B)^{-1}$, and hence \eqref{taylor_err} applies. Combined with \eqref{taylor_norm} we have
$$
\left| \|\vb+\beta (\vb^\top \bY) \bY \|^{-1} - 1 + \frac12\left(2\beta (\vb^\top \bY)^2 + \beta^2 (\vb^\top \bY)^2 \|\bY\|^2 \right) \right| \le 
\frac38 \left(3\beta (\vb^\top \bY)^2 \right)^2.
$$
Noticing $|\vb^\top \bY| \le \|\bY\|$, triangle inequality suggests
$$
\left| \|\vb+\beta (\vb^\top \bY) \bY \|^{-1} - 1 + \beta (\vb^\top \bY)^2\right|
\le C\beta^2\|\bY\|^4 \le CB^2\beta^2, 
$$
completing the proof.

\end{proof}

\begin{proof}[Proof of Proposition \ref{prop:infquan}]
Setting $Q = \|\vb+\beta (\vb^\top \bY) \bY \|^{-1} - 1 + \beta (\vb^\top \bY)^2$. Then
\begin{align*}
\Delta v_k
&= \|\vb+\beta (\vb^\top \bY) \bY \|^{-1} \left(v_k + \beta \vb^\top \bY\, Y_k\right) - v_k \nonumber\\
&=\left( 1 - \beta (\vb^\top \bY)^2 + Q \right)  \left(v_k + \beta \vb^\top \bY\, Y_k\right) - v_k  \\
&=  \beta\left(   (\vb^\top \bY) Y_k - v_k (\vb^\top \bY)^2 \right) + Q_k,
\end{align*}
where
\beq\label{Qk}
Q_k  = \left(v_k + \beta  \vb^\top \bY\, Y_k\right) Q
  - \beta^2 (\vb^\top \bY)^3 Y_k.
\eeq
Note the term
$$
\beta\left[(\vb^\top \bY) Y_k - v_k(\vb^\top \bY)^2\right]
$$
is absolutely bounded by $2B\beta$, and taking expectation gives
\begin{align*}
\EE \left[(\vb^\top \bY) Y_k - v_k(\vb^\top \bY)^2\right]
&= v_k\lambda_k - v_k \EE (\vb^\top \bY)^2   \\
&= v_k\lambda_k - v_k \vb^\top \EE (\bY \bY^\top) \vb^\top   
= v_k \left( \lambda_k -  \vb^\top\bLambda \vb  \right).
\end{align*}

To this stage, we have verified
\beq\label{incremental0}
\Delta v_k =  \beta\left(   (\vb^\top \bY) Y_k - v_k (\vb^\top \bY)^2 \right) + Q_k.
\eeq
\eqref{incremental} as long as Eqs.~\eqref{infbdd} and \eqref{infmean} in Proposition \ref{prop:infquan} can be concluded if 
\beq\label{Qkbdd}
\left| Q_k \right| \le CB^2 \beta^2,
\eeq
since this implies for $E_{1,k}(\vb) = \EE Q_k$ we have $|E_{1,k}(\vb)|\le \EE |Q_k| \le CB^2\beta^2$. To conclude \eqref{Qkbdd}, note that $\beta \le (3B)^{-1}$ and hence
$$
\left|v_k + \beta \vb^\top \bY\, Y_k\right| \le 1 + \beta B \le \frac43.
$$
Lemma \ref{lemm:taylor} implies
$$
|Q|\le C_{\ref{lemm:taylor}} \beta^2 (\vb^\top \bY)^4 \le C_{\ref{lemm:taylor}}B^2 \beta^2.
$$
Therefore the first term on RHS of \eqref{Qk} is absolutely bounded by $2C_{\ref{lemm:taylor}} B^2\beta^2$. For the second term in \eqref{Qk} we have
$$
 | \beta^2 (\vb^\top \bY)^3\, Y_k | \le  B^2\beta^2.
$$
We thereby verified \eqref{Qkbdd} by taking $C = 2C_{\ref{lemm:taylor}} + 1$, which completes all the proof of Proposition \ref{prop:infquan}.

\end{proof}

\pb\subsection{Proof of Theorem \ref{theo:ODE}}\label{ssec:proof,theo:ODE}

\begin{proof}
Let $V_k^\beta(t) = v_k^{\beta,[ t\beta^{-1} ]}$, the Proposition \ref{prop:infquan} implies for $V_k^\beta(t) = \vb$ the change for coordinate $k$ at $t =n\beta$ is
$$
V_k^\beta(t+\beta) - V_k^\beta(t) 
= 
 \beta\left(   (\vb^\top \bY) Y_k - v_k (\vb^\top \bY)^2 \right) + R_k
,
$$
where $|R_k|\le CB^2\beta^2$. \eqref{infmean} implies that the infinitesimal mean is 
\begin{align*}
\frac{\ud}{\ud t} \EE V_k^\beta(t) 
&= 
\beta^{-1} \EE\left[  V_k^\beta(t+\beta) - V_k^\beta(t)      \, \big|\, V_k^\beta(t) =\vb \right]   
\\&= 
v_k \left( \lambda_k -  \vb^\top\bLambda \vb  \right) + \cO(\beta)
,
\end{align*}
Using \eqref{infbdd} we can compute the infinitesimal variance
\begin{align*}
\frac{\ud}{\ud t} \EE( V_k^\beta(t) - v_k )^2 
&= 
\beta^{-1} \EE\left[  (V_k^\beta(t+\beta) - V_k^\beta(t))^2   \, \big|\, V_k^\beta(t) =\vb \right]  
\\&\le 
\beta^{-1} \cdot C_{\ref{prop:infquan},2}^2 B^2\beta^2 \to 0
.
\end{align*}
Let $V_k(t)$ be the solution to ODE system \eqref{ODEIVP} with initial values $V_k(0) = v_k^{(0)}$. Applying standard infinitesimal generator argument \citep[Corollary 4.2 in Sec.~7.4]{ETHIER-KURTZ} one can conclude that as $\beta\to 0^+$, the Markov process $V_k^\beta(t)$ converges weakly to $V_k(t)$. 

\end{proof}

\pb\subsection{Proof of Theorem \ref{theo:SDE}}\label{ssec:proof,theo:SDE}
\begin{proof}
Let $U_i^\beta(t) = \beta^{-0.5} v_i^{\beta, (\lfloor t\beta^{-1} \rfloor)}$. Proposition \ref{prop:infquan} implies for $\bV^\beta(t) = \eb_k + \cO(\beta^{0.5})$ the change for coordinate $i\ne k$ at $t =n\beta$ is
$$
U_i^\beta(t+\beta) - U_i^\beta(t)
=
\beta^{-0.5}\cdot
\beta\left(   (\vb^\top \bY) Y_i - (\vb^\top \bY)^2 v_i \right) 
+
\cO(B^2\beta^{1.5})
.
$$
Hence \eqref{infmean} allows us to compute the infinitesimal mean as
\begin{align*}
\frac{\ud}{\ud t} \EE U_{\underline k, i}^\beta(t)
&= 
\beta^{-1} \EE\left[  U_{\underline k, i}^\beta(t + \beta) - U_{\underline k, i}^\beta(t) 
    \, \big|\, \bU_{\underline k}^\beta(t)  = \bU \right]   
\\&= 
 \left( \lambda_i -  \lambda_k  \right) \cdot \beta^{0.5}\cdot v_i
 + \cO(\beta^{1.5})
=
- \left( \lambda_k -  \lambda_i  \right) \cdot U^\beta_{\underline k, i}
 + \cO(\beta)
.
\end{align*}
Using \eqref{infbdd} we can compute the infinitesimal variance for coordinates $i,j\ne k$
\footnote{Here, we implicitly assume that the fourth-order tensor has a sparse structure which is satisfied for gaussian distributions, for pure presentation purposes. We have a more general calculation in the full version of this paper.}
\begin{align*}
&\quad
\frac{\ud}{\ud t} \EE\left[
\left(U_{\underline k, i}^\beta(t + \beta) - U_{\underline k, i}^\beta(t)\right)
\left(U_{\underline k, j}^\beta(t + \beta) - U_{\underline k, j}^\beta(t)\right)
    \, \big|\, \bU_{\underline k}^\beta(t)  = \bU \right]   
\\&= 
\beta^{-1}\cdot
\beta\left(   Y_k^2 Y_i Y_j \right)  + \cO(\beta)
\to
\lambda_k^2\lambda_i^2 1_{i\ne j}
.
\end{align*}
Thus by applying infinitesimal generator argument \citep[Corollary 4.2 in Sec.~7.4]{ETHIER-KURTZ} one can conclude that as $\beta\to 0^+$, the Markov process $\beta^{-1/2} \vb_{\underline k}^{\beta,(\lfloor t\beta^{-1} \rfloor)}$ converges weakly to $\bU_{\underline k}(t)$.

\end{proof}

\pb\section{Miscellaneous}
\paragraph{Rate in Rayleigh Quotient.}
From \eqref{Esin2}, the convergence rate in terms of the angle between $\ub^{(N^{\bar\beta(T)})}$ and $\ab_1$ is $C\cdot \left(d\cdot \log T / T \right)^{1/2}$. Such a rate of convergence is well-known as \emph{nearly optimal} in $T$, as indicated in \cite{vu2013minimax}. In terms of the objective function as Rayleigh quotient, once the Oja's iteration dives into the neighborhood of principal component its distribution is approximately the stationary distribution
\[
v_k \sim N\left(0, \frac{\lambda_1\lambda_k}{2\left(\lambda_1 - \lambda_k\right)} \beta \right)
.
\]
Let $F(\vb) = \lambda_1 - \vb^\top \bLambda \vb = \sum_{k = 2}^d (\lambda_1-\lambda_k) v_k^2$ denote the objective function. Hence at stationarity
\begin{align*}
\EE F(\vb^{(T)}) 
&= 
\beta \cdot \sum_{k = 2}^d (\lambda_1 - \lambda_k) \cdot \frac{\lambda_1\lambda_k}{2(\lambda_1 - \lambda_k)}
= 
\beta \cdot \frac{\lambda_1}{2} \sum_{k = 2}^d \lambda_k 
.
\end{align*}
Hence by choosing $\beta = \bar\beta(T)$ as in \eqref{betachoose} the iteration is \textit{approximately} at stationarity, and we obtain
\beq\label{EFv}
\EE F(\vb^{(T)}) 
\lesssim 
C  \cdot \frac{\lambda_1 \sum_{k = 1}^d \lambda_k - \lambda_1^2}{2} \cdot \frac{ \log T}{(\lambda_1-\lambda_2) T}
.
\eeq
The term $\sum_{k = 1}^d \lambda_k$ is called the \textit{effective rank} in the PCA literatures. Note the results in \eqref{Esin2} and \eqref{EFv} do \textit{not} include each other and can be used as different measures for convergence rate estimation.

%
%
%\paragraph{Global Convergence Rate}
%Note that all existing results consider the Oja's iteration under uniformly random initialization on the unit sphere. In contrast, our convergence rate holds for a uniformly random initialization, a point mass at an arbitrary single vector, or any distribution on the unit sphere. This is essential in the convergence rate analysis since the angle bound holds unconditionally in the whole space instead of a high-probability event.
%

\paragraph{Sharpest finite-sample error bound.}

We summarize all existing rate of convergence results for online PCA in Table 1. In short, our work provides a finer rate that matches the minimax lower bound and suggests the necessity of further work on the minimax theory for PCA \cite{cai2013sparse,vu2013minimax}. Our informal derivation above suggests a finer error bound than the recent work \cite{jain2016matching}, whose optimal rate result depends on sample bound $B$ instead of eigenvalues. Using the same algorithm, our rate of convergence
$$
C \cdot
\frac{\lambda_1}{\lambda_1-\lambda_2} \sum_{k=2}^d \frac{\lambda_k}{\lambda_1 - \lambda_k} 
\cdot
\frac{ 1}{N}
$$
is faster than any existing results.

\begin{table}\label{tab:rate}
\centering
\begin{tabular}{|l|c|c|}
\hline
Algorithm
&
$\sin^2 \angle(\ub^{(n)},\ub^*)$
&
Optimality
\\ \hline
Minimax rate \citep{vu2013minimax}
&
$C\cdot \dfrac{\lambda_1\lambda_2\cdot d}{(\lambda_1-\lambda_2)^2} \cdot \dfrac{1}{n} $
&
Lower bound
\\
Alecton \citep{de2015global}
&
$C\cdot \dfrac{B \lambda_1\cdot d}{(\lambda_1-\lambda_2)^2} \cdot \dfrac{1}{n} $
&
No
\\
Block power method
\citep{mitliagkas2013memory,Hardt:2013wy}
&
$C\cdot \dfrac{B \lambda_1^2}{(\lambda_1-\lambda_2)^3}
\cdot \dfrac{1}{n} $
&
No
\\
Online PCA, Oja \citep{balsubramani2013fast}
&
$C\cdot \dfrac{B^2}{(\lambda_1-\lambda_2)^2} \cdot \dfrac{1}{n} $
&
No
\\
Online PCA, Oja \citep{shamir2015convergence}
&
$C\cdot \dfrac{B^2\cdot d}{(\lambda_1-\lambda_2)^2} \cdot \dfrac{1}{n} $
&
No
\\
Online PCA, Oja \citep{jain2016matching}
&
$C\cdot \dfrac{B \lambda_1}{(\lambda_1-\lambda_2)^2} \cdot \dfrac{1}{n} $
&
Yes
\\
Online PCA, Oja (this work)
&
$
C\cdot 
\dfrac{\lambda_1}{\lambda_1-\lambda_2}\displaystyle\sum_{k=2}^d \dfrac{\lambda_k}{\lambda_1-\lambda_k}
 \cdot \dfrac{1}{n}
$
&
Yes
\\
\hline
\end{tabular}
\vspace{.1in}

\caption{
Comparable results on the convergence rate of online PCA. %Note since the minimax information lower bound in Theorem 3.1 of \cite{vu2013minimax} is attained in the spiked covariance case $\lambda_2=\cdots=\lambda_d$. 
Note that our result matches the minimax information lower bound \cite{vu2013minimax} in the case where $\lambda_2=\cdots=\lambda_d$. Our result provides a finer estimate than the minimax lower bound in the more general case where $\lambda_2\neq\lambda_d$. 
Note that the constant $C$ hides poly-logarithmic factors of $d$ and $n$. 
}
\end{table}

\noindent\textbf{General SDE from equator.}
In \S\ref{sec:finitesample} we only consider the case where the initialization is near a saddle point. For the general case if we start from some initial measure concentrated around $\cS_e$, the approximate SDE \eqref{OUSDE2} can be similarly found. Let
\[
L(\vb) 
= \frac{\vb^\top \bLambda \vb - \lambda_1 v_1^2}{1 - v_1^2} 
= \frac{\vb_{\underline 1}^\top \bLambda_{\underline 1} \vb_{\underline 1}}{\vb_{\underline 1}^\top \vb_{\underline 1}}.
\]
$L(\vb)\in [\lambda_d, \lambda_2]$ can be regarded as a convex combination of $(d-1)$-dimensional vector $(\lambda_2, \dots, \lambda_d)^\top$ with weights $(v_2^2/v_1^2, \dots, v_d^2/v_1^2)^\top$. Recall that Theorem \ref{theo:ODE} has $\vb^{\beta,(\lfloor t\beta^{-1} \rfloor)} \approx \bV(t)$, so we have the following
\beq\label{OUSDE2prime}
\ud U(t) = \left[ \lambda_1  -  L(\bV(t))  \right] U(t) \,\ud t +  \left[ \lambda_1\cdot L(\bV(t))  \right]^{1/2} \,\ud B(t).
\eeq
In comparison with \eqref{OUSDE2} we replace $\lambda_2$ by the quantity $L(\bV(t))$. The coefficient in the drift term of \eqref{OUSDE2prime} is $\lambda_1  -  L(\bV(t))$ which is no less than $\beta(\lambda_1-\lambda_2)$. Since the stochastic equation \eqref{OUSDE2prime} is \emph{not} in closed-form, so we are in lack of a theory of a weak convergence to justify a result analogous to Theorem \ref{theo:SDE}. This suggests an interesting problem that is left for future research.

\paragraph{Validity of small step-size approximation.}
Our analysis works in the setting when when the stepsizes are infinitesimal small. To justify this, we detail the discussions as follows.
\begin{enumerate}[label=(\roman*)]
\item
Choosing small stepsize is a common practice in nonconvex optimization SGD.

This has many practical reasons. One of the main reason is that if the stepsize is large, a warm-initialized iteration risks bouncing back to the cold region, while the small stepsize guarantee the stability of SGD algorithm and ensure a decrease of function value in the long run.

\item
The probability of failure is positively correlated to the stepsize.

We choose the stepsize to be (approximately) inversely proportional to $N$ so the convergence rate result holds with high probability when sample $N$ is large. The differential equation approximation method is then very meaningful as it explicitly characterizes what in essence happens in the algorithm iterations.
\end{enumerate}

\end{document}